\let\NAT@parse\undefined
\let\NAT@parse\undefined
\newcommand{\bunderline}[1]{\underline{#1\mkern-4mu}\mkern4mu }
\newcommand{\overbar}[1]{\mkern 1.0mu\overline{\mkern-1.0mu#1\mkern-1.0mu}\mkern 1.0mu}
\newcommand{\diag}{\mathop{\mathrm{diag}}}
\newtheorem{definition}{\bf{Definition}}
\newtheorem{theorem}{\bf{Theorem}}
\newtheorem{lem}{\bf{Lemma}}
\newcommand{\bs}{\boldsymbol}
\newcolumntype{C}[1]{>{\centering\let\newline\\\arraybackslash}m{#1}}
\title{\LARGE \bf
Safe Force/Position Tracking Control via Control Barrier Functions for Floating Base Mobile Manipulator Systems}
\author{Maryam Sharifi, Shahab Heshmati-Alamdari 
\thanks{ Maryam Sharifi is with ABB Corporate Research, Västerås, Sweden, Email: {\tt\small  \{maryam.sharifi@se.abb.com\}}. Shahab Heshmati-alamdari is with the Section of Automation \& Control, Department of Electronic Systems, Aalborg University, Denmark, Email: {\tt\small \{shhe@se.aau.dk\}}.
}
}
\begin{document}

\maketitle

\begin{abstract}
This paper introduces a safe force/position tracking control strategy designed for Free-Floating Mobile Manipulator Systems (MMSs) engaging in compliant contact with planar surfaces. The strategy uniquely integrates the Control Barrier Function (CBF) to manage operational limitations and safety concerns. It effectively addresses safety-critical aspects in the kinematic as well as dynamic level, such as manipulator joint limits, system velocity constraints, and inherent system dynamic uncertainties. The proposed strategy remains robust to the uncertainties of the MMS dynamic model, external disturbances, or variations in the contact stiffness model. The proposed control method has low computational demand ensures easy implementation on onboard computing systems, endorsing real-time operations. Simulation results verify the strategy's efficacy, reflecting enhanced system performance and safety. 
%\red{\href{https://shahabheshmati.github.io/Papers/ARC_2017.pdf}{The paper is here!!}}
\end{abstract}
\section{Introduction}\label{sec:introduction}
In recent years, a growing number of robotic tasks, beyond just mobility, necessitate autonomous robots cooperating as a team \cite{nikou2017decentralized} and being equipped with interaction capabilities. This development positions them as suitable substitutes for humans in various applications\cite{sereinig2020review}. Mobile Manipulator Systems (MMSs) \cite{heshmati2023control} can be classified from various angles, including the domain of application or their specific type \cite{ baizid2017behavioral}.  A particularly complex category within MMSs is identified as Floating Base Mobile Manipulator Systems (FBMMSs) \cite{karras2022image}. These systems feature a vehicle that is not anchored, effectively putting the base of the manipulator in a "flying mode". Such a configuration grants the system the liberty to navigate through all axes of movement, independent of the actuation of these movements.
\begin{figure}[h]
	\centering
	\setlength{\fboxsep}{0pt}%
	\setlength{\fboxrule}{2pt}%
	\includegraphics[width=0.34\textwidth]{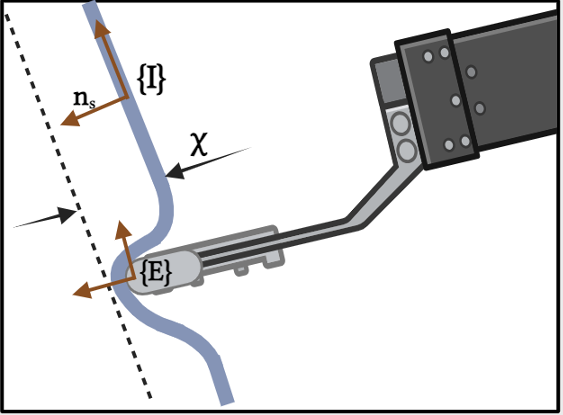}
	\caption{A graphical illustration of the MMS end-effector in compliant contact with a planar surface.}
	\label{fig:comp_envir}      
\end{figure}
When using MMSs, certain unique characteristics and constraints arise, some of which are determined by the system's operational domain. These nuances often require a specific array of onboard sensors and careful consideration of their specifications during control design. Additionally, during the control design for these systems, it's essential to meet safety constraints like joint limits and prevent system singularities, while also considering performance metrics such as system manipulability \cite{heshmati2019distributed}. For many tasks involving MMs, especially those requiring precision like object retrieval, human-robot interactions, or specific tasks like welding, it's essential for the system to interact precisely with its environment. Such tasks often call for adherence to preset performance standards, whether it's maintaining contact, as in welding, or preventing force overshoots, vital in delicate interactions, or sampling from sensitive environments \cite{heshmati2017robust}. In applications like welding or grinding, these systems are tasked with interacting with restricted surfaces. Consequently, tracking the intended trajectory and adhering to the desired force constraints have emerged as complex issues in the research community \cite{rani2018efficient}.

While the literature on trajectory tracking control is extensive, studies on position/force tracking control of mobile manipulators are notably sparse. Control over these manipulators' desired position/force is often achieved through adaptive or robust control strategies \cite{dong2002trajectory}. Various robust and/or adaptive methodologies have been investigated to counter uncertainties in motion and force control, while considering specific challenges such as actuator dynamics, or non-holonomic base features. However, despite these advancements, a holistic approach that simultaneously prioritizes safety in both system and task execution performance and specification remains relatively unexplored \cite{sereinig2020review}.

%%Here it opens the door for safety>>
In this work, we incorporate safety constraints through control barrier functions (CBFs), leading to control laws derived by implementing quadratic programming (QP) that ensures the system's safety. This approach has been applied across a variety of applications, including robotics \cite{wang2017safety}, multi-agent systems \cite{ sharifi2022higher}, and automated vehicles \cite{alan2023control}, among others. It has already been noted that safety constraints concern not only the kinematic behavior of the FBMMS but also heavily depend on the system's dynamic model. We address all these constraints using zeroing control barrier functions (ZCBFs), which are well-defined on the boundary and exterior of the safe sets, unlike the reciprocal CBFs \cite{kolathaya2023energy}. The solution's consistency with respect to both the kinematics and dynamics of the system is ensured, and the modelling uncertainties of the system dynamics are handled by a robust QP formulation.

\section{Problem Formulation}\label{Section:Problem}
Consider a FBMMS with $n$ degrees of freedom that's in compliant contact with a flat surface. Define the state variables of the FBMMS as $\bs{q}=[\bs{q}^\top_a,~\bs{q}^\top_m]^\top\in \mathbb{R}^n$. Here, $\bs{q}_a=[\bs{\eta}_1^\top,\bs{\eta_2}^\top]^\top\in \mathbb{R}^6$ encompasses the position vector $\bs{\eta}_{1}=[x_v,y_v,z_v]^\top$ and the orientation $\bs{\eta}_{2}=[\phi,\theta,\psi]^\top$ of the vehicle, depicted using the Euler-angles in relation to an inertial frame $\{I\}$. Furthermore, $\bs{q}_m\in \mathbb{R}^{n-6}$ represents the angle vector for the manipulator's joints. Introducing the frame $\{E\}$ positioned at the FBMMS's end-effector, it's defined by a position vector $\bs{x}_e=[x_e,y_e,z_e]^\top\in \mathbb{R}^3$ and an orientation matrix $\bs{R}_e=[\bs{n}_e,\bs{o}_e,\bs{\alpha}_e]$ in relation to the inertial frame $\{I\}$. Additionally, let $\bs{\omega}_e$ denote the end-effector's rotational velocity, for which $\bs{S}(\bs{\omega}_e)=\dot{\bs{R}}_e\bs{R}_e^\top$. The skew-symmetric matrix for the vector $\bs{d}=[d_x~d_y~d_z]^\top$ can be expressed as:
\begin{align*}
\bs{S}(\bs{d})=\begin{bmatrix}
0 & -d_z & d_y\\
d_z &  0 & -d_x\\
-d_y & d_x & 0 
\end{bmatrix}
\end{align*}. Consider $\dot{\bs{x}}$ as an element of $\mathbb{R}^6$ representing the velocity of the end-effector frame.
%, composed of $\dot{\bs{x}}_{e}$ and $\bs{\omega}_e$. 
Drawing from the findings in \cite{antonelli}, one can express:
\begin{equation}
\dot{\bs{x}} = \bs{J}(\bs{q})\boldsymbol{\zeta}\label{eq222}
\end{equation}
Here, $\boldsymbol{\zeta}=[\bs{v}^\top,\bs{\zeta}_{m,i}^\top]^\top \in \mathbb{R}^{n}$  encompasses the velocity components: the vehicle's body velocities $\bs{v}$ and the joint velocities of the manipulator, represented as $\bs{\zeta}_{m,i}$ for $i$ ranging from 1 to $n-6$. Additionally, $\bs{J}(\bs{q})$ of dimension $\mathbb{R}^{6\times n}$ is identified as the geometric Jacobian matrix, as detailed in \cite{antonelli}.

Assuming initial contact of the MMS with a planar surface, knowledge of the normal and tangential vectors in relation to the inertial frame $\{I\}$ is presumed. For clarity in representation, let the inertial frame $\{I\}$ be anchored at a specific point on the surface. The frame's $x$-axis, oriented normal to the surface, points inward as illustrated in Fig.\ref{fig:comp_envir}.

Now, let us denote the unit vector normal to the contact surface and the generalized normal vector as $\bs{n}_s=[1~0~0]^\top\in \mathbb{R}^3$  and $\bs{n}=[\bs{n}_s^\top~\bs{0}_3^\top]^\top\in \mathbb{R}^6$ respectively.

We also assume that the end-effector is rigid, thus the contact compliance arises from the planar surface\footnote{In case of FBMMS with soft tip, the compliance may arise either from the tip side or the surface or both. Thus, the deformation can also be derived without affecting the subsequent analysis.}. Hence, the deformation $\chi$ is given as a function of $\bs{x}_e$ as follows:
\begin{align}
\chi=\bs{n}_s^\top\bs{x}_e=x_e\label{deformation}
\end{align}
and its derivative is calculated by:
\begin{align}
\dot{\chi}=\bs{n}_s^\top\dot{\bs{x}}_e= \dot{x}_e\label{dot_defor}
\end{align}
During an intervention task, the FBMMS exerts an interaction wrench $\bs{\lambda}\in\mathbb{R}^6$ at the contact, which can be measured by a force/torque sensor attached to its end-effector. This interaction wrench can be decomposed into: (i) $\bs{n}\bs{n}^\top\bs{\lambda}$ that is normal to the surface and (ii) $(\bs{I}_{6\times 6}-\bs{n}\bs{n}^\top)\bs{\lambda}$ involving tangential forces and torques, owing to tangential deformations and friction terms. In this work, we assume that the normal force magnitude $f=\bs{n}^\top\bs{\lambda}$ is a positive and continuously differentiable nonlinear function of the material deformation $\chi$:
\begin{align}
f=\Phi(\chi), \quad \forall \chi\geq 0.\label{magn_force}
\end{align}
The aforementioned general formulation includes several force deformation models such as the Hertz model \cite{johnson1987contact} ($\Phi(\chi)= k\chi^{\frac{3}{2}},k>0$) or the quadratic model $\Phi(\chi)=k\chi^2,k>0$  \cite{arimoto2000dynamics}. 
The time derivative of the normal force magnitude in view of \eqref{dot_defor} is then given by:
\begin{align}
\dot{f}= \partial \Phi(\chi)\dot{x}_e \label{dot_f}
\end{align}
where $\partial\Phi(\chi)=\frac{d\Phi}{d\chi}$ is strictly positive for all $\chi\geq\bunderline{\chi}^*>0$, where $\bunderline{\chi}^*$ is any strictly real positive number. Thus, there is an unknown strictly positive constant $\partial \Phi^*$ such that:
\begin{align}
\partial\Phi(\chi)\geq\partial\Phi^*>0, \forall \chi\geq\bunderline{\chi}^*.\label{bound_part_force}
\end{align}

Without loss  of generality, the dynamics of the FBMMS in complaint contact with the environment is formulated as \cite{antonelli}:
\begin{gather}
\bs{M}(\bs{q})\dot{\bs{\zeta}}\!+\!\bs{C}({\bs{q}},\bs{\zeta}){\bs{\zeta}}\!+\!\bs{D}({\bs{q}},\bs{\zeta}){\bs{\zeta}}\!+\!\bs{g}(
\bs{q})\!+\!{\bs{J}}^\top(\bs{q})\boldsymbol{\lambda}\!+\!\boldsymbol{\delta}(\bs{q},\bs{\zeta},t)\!=\!\bs{\tau}\label{eq4}
\end{gather}
where $\boldsymbol{\delta}(\bs{q},\bs{\zeta},t)$ encapsulates bounded unmodeled terms and external disturbances (sea waves and currents). Moreover, $\bs{\tau} \in \mathbb{R}^n$ denotes the control input at the joint/thruster level, $\bs{{M}}(\bs{q})$ is the positive definite inertial matrix, $\bs{{C}}({\bs{q}},\bs{\zeta})$ represents coriolis and centrifugal terms, $\bs{{D}}({\bs{q}},\bs{\zeta})$ models dissipative effects, $\bs{{g}}(\bs{q})$ encapsulates the gravity and buoyancy effects and $\bs{J}^\top(\bs{q})\bs{\lambda}$ represents the effect of the external forces/torques applied at the end-effector owing to the contact.

% Finally, the problem  to be solved in this work is formulated as follows:
% \begin{problem}
%     Given a MMS, a desired force profile along the normal to the surface direction $f^d(t)$, a desired position trajectory $[y_e^d(t),~z_e^d(t)]^\top\in \mathbb{R}^2$ on the planar surface,  as well as a smooth rotation matrix target $\bs{R}^d(t)=[\bs{n}^d(t), \bs{o}^d(t), \bs{\alpha}^d(t)]$, design a feedback control law, without incorporating any information regarding either the MMS dynamics or the force deformation model, such that the following are satisfied: 
%     \begin{enumerate}
%         \item Predefined behavior in terms of  overshoot, convergence rate and maximum steady state error on the force/position/orientation tracking errors;
%         \item Contact maintenance;
%         \item Bounded closed loop signals;
%         \item Robustness against external disturbances.
%     \end{enumerate}
% \end{problem}
% %
% % The desired position trajectory may be defined in local surface coordinates $^{\{S\}}\!\bs{x}^d=[ ^{\{S\}}\!{x}^d_1,~^{\{S\}}\!{x}^d_2 ]^\top\in \mathbb{R}^2$ and $\bs{x}^d_e(t)\in \mathbb{R}^3$ is then calculated in view of Assumption\ref{assumption_surf} and using information on surface position as: 
% %\begin{align*}
% %\bs{x}^d_e(t)= \begin{bmatrix}
% %0\\
% %^{\{S\}}\!\bs{x}^d+\bs{x}_s
% %\end{bmatrix}~\in \mathbb{R}^3
% %\end{align*}
% %

\section{Control Methodology}\label{control_method}
When addressing force/position control challenges, it's crucial that the robot's end-effector adheres to a predefined force path normal to a contact surface, maintains the target position trajectory on that surface, and achieves the desired orientation in relation to the surface. The errors for force, position, orientation, and the cumulative error vector are described as follows:
\begin{subequations}\label{eq8}
	\begin{align}
	&{e}_f= f- f^d\label{err_1}~\in \mathbb{R},\\
	&\bs{e}_p\triangleq\begin{bmatrix}
	e_{y}\\e_{z}
	\end{bmatrix}=\begin{bmatrix}
    y_e - y_e^d\\
     z_e- z_e^d
	\end{bmatrix}~\in \mathbb{R}^2,\label{err_2}\\
	&\bs{e}_o \triangleq\begin{bmatrix}
	e_{o_1}\\e_{o_2}\\e_{o_3}
	\end{bmatrix}=\frac{1}{2}\Big(\bs{n}_e\times\bs{n}^d+\bs{o}_e\times\bs{o}^d+\bs{\alpha}_e\times\bs{\alpha}^d\Big)\label{err_4}~\in\mathbb{R}^3,\\
	&~\bs{e} \triangleq [e_f,e_y,e_z,e_{o_1},e_{o_2},e_{o_3}]^\top\in\mathbb{R}^6.\label{err_3}
	\end{align}
\end{subequations}
For the calculation of the orientation error $\boldsymbol{e}_o$, the cross product operation between $\bs{R}_e$ and $\bs{R}^d$ is utilized to address the singularities often present when using Euler angles for rotation representations, as suggested in the literature \cite{caccavale1998resolved, sciavicco2012modelling}. 

In view of \eqref{dot_f}, time differentiation of errors \eqref{err_1}-\eqref{err_4} leads to their respective rates as:
\begin{subequations}\label{err_dot}
	\begin{align}
	&\dot{{e}}_f= \partial f(\chi) \dot{{x}}_e- \dot{f}^d \label{err_dot_1},\\
	&\dot{\bs{e}}_p=\begin{bmatrix}
	\dot{e}_{y}\\\dot{e}_{z}
	\end{bmatrix}=\begin{bmatrix}
	\dot{y}_{e}- \dot{y}^d_{e}\\
	\dot{z}_{e}- \dot{z}^d_{e}
	\end{bmatrix}\label{err_dot_2},\\
	&\dot{\bs{e}}_o =\begin{bmatrix}
	\dot{e}_{o_1}\\\dot{e}_{o_2}\\\dot{e}_{o_3}
	\end{bmatrix}=\bs{L}\bs{\omega}_e-\bs{L}\bs{\omega}^d, \label{err_dot_3}
	\end{align}
\end{subequations}
where $\bs{L}$ is defined as:
\begin{align}
\bs{L}=\frac{1}{2}\Big[   \bs{S}(\bs{n}_e)\bs{S}(\bs{n}^d)     +\bs{S}(\bs{o}_e)\bs{S}(\bs{o}^d) + \bs{S}(\bs{\alpha}_e)   \bs{S}(\bs{\alpha}^d)        \Big]  \label{L_defin}
\end{align}
which is full rank when the relative orientation between the frames $\bs{R}_e$ and $\bs{R}^d$ is confined less than $90^\circ$ for an angle-axis local parametrization and hence is not restrictive for practical cases \cite{sciavicco2012modelling}. 

\subsection{Control Design}
In this section, zeroing control barrier functions (ZCBF) which guarantees the forward invariance of the corresponding safe sets are used to meet the control objectives. Consider the force, position and orientation errors evolve strictly within predefined safe regions that are  mathematically expressed as: 
\begin{align}
-\bunderline{M}_{i}<e_{i}(t)<\overline{M}_{i},~i\in\{f,y,z,o_1,o_2,o_3\},~\forall t\geq 0\label{eq:11}
\end{align}

%where $\bunderline{M}_i\equiv [\bunderline{M}_{ij(i)}]$, ${\rho}_{i}\equiv[{\rho}_{ij(i)}]$, $e_{i}\equiv[e_{ij(i)}]$,  $\overline{M}_{i}\equiv[\overline{M}_{ij(i)}]$,  $\rho^\infty_i\equiv[\rho^\infty_{ij(i)}]$, $l_i\equiv[l_{ij(i)}]$, $i\in\{f,p,o\}$, $j(f)=1$, $j(p)\in\{1,2\}$ , $j(o)\in\{1,2,3\}$.
The constants $ \bunderline{M}_i,~ \overline{M}_i$, are selected such that \eqref{eq:11} is satisfied at $t=0$ (i.e.,  $-\bunderline{M}_i< e_i(0)<\overline{M}_i$). %The constant $\rho^\infty_i=\lim_{t\rightarrow\infty}\rho_i(t)$ represents the maximum allowable size of $e_i(t)$ at the steady state, which can be set arbitrarily small to a value reflecting the resolution of the measurement device, thus achieving practical convergence of $e_i(t)$ to zero. 
The bounds of the errors $e_i(t)$, and actually the maximum overshoot is defined to be less than $\bunderline{M}_i)$ or  $\overline{M}_i$. Thus, the appropriate selection  of the design constants $\bunderline{M}_i$, $\overline{M}_{i}$, $i\in\{f,y,z,o_1,o_2,o_3\}$ encapsulates performance characteristics for the corresponding tracking errors  $e_{i}$, $i\in\{f,y,z,o_1,o_2,o_3\}$.

The satisfaction of the performance criteria for force error, as outlined in \eqref{eq:11}, paves the way for ensuring two critical conditions. Firstly, contact with the surface remains unbroken, meaning $f(t) \geq \bunderline{f}^*>0,~\forall t\geq 0$, where $\bunderline{f}^*$ denotes a positive constant. Secondly, there's prevention against incurring inordinately high interaction forces, illustrated as $f(t)\leq \overline{f}^*>0,~\forall t\geq0$, with $\overline{f}^*$ being another positive constant that is strictly greater than $\bunderline{f}^*$. Building on this foundation, criteria for $\bunderline{M}_f$ and $\overline{M}_f$ are chosen to meet: $\inf_{t\geq0}\{ -\bunderline{M}_f+f^d(t) \}>\bunderline{f}^*$ and $\sup_{t\geq0}\{ \overline{M}_f+f^d(t) \}<\overline{f}^*$. These criteria confirm that for any $t\geq 0$, $f(t)$ resides within the bounds given by $\inf_{t\geq0}\{ -\bunderline{M}_f+f^d(t) \}$ and $\sup_{t\geq0}\{ \overline{M}_f+f^d(t) \}$. Lastly, considering the constraints of \eqref{bound_part_force}, there are two constants, $\bunderline{\partial f}$ and $\overline{\partial f}$. Their relationship is defined as $0<\bunderline{\partial f}\leq \partial f(\chi) \leq\overline{\partial f}$.

 \subsection{Kinematic Safety-Critical Control}
Suppose that we have a desired trajectory in force $f^d(t)$, in position $\bs{p}^{d}(t)=[y^d(t), z^d(t)]^\top$ and a desired rotation $\bs{R}^d(t)$ to be tracked by the controller and the error vector for this trajectory as defined in \eqref{err_dot}. Then, if we pick the trajectory tracking controller $\dot{\bs{x}}_{c}= [\dot{{x}}_e^{{c}},\dot{\bs{p}}_e^{{c}^\top}, \bs{\omega}_e^{{c}^\top}]^\top$, in which $\dot{\bs{p}}_e^c=[\dot{y}_e^c,\dot{z}_e^c]^\top$, such that $\dot{\bs{e}}=-\gamma\bs{e}$ for a positive constant $\gamma$, a stable linear system will be resulted and the reference trajectory is being tracked. For this aim and in view of \eqref{bound_part_force}, by picking 
\begin{subequations}\label{traj_cont}
\begin{align}
&\dot{{x}}_e^{c}=\partial f(\chi)^{-1}(\dot{{f}}^d-\gamma {e}_f),\\
   &\dot{\bs{p}}_e^{c}=\dot{\bs{p}}^d-\gamma\bs{e}_p,\\
   &\bs{\omega}_e^{c}=\bs{\omega}^d-\gamma\bs{e}_o\bs{L}^{-1},
\end{align}
\end{subequations}
the exponential stability in the error dynamics, i.e., $\bs{e}(t)\leq exp(-\gamma t)\bs{e}(0)$ is achieved.
Next we propose a state feedback control protocol that incorporates the performance constraints of \eqref{eq:11}  by employing the ZCBF notion and achieves force/position/orientation tracking of the corresponding smooth and bounded desired trajectories. 
\begin{definition}\label{def1}
\cite{ames2016control} Zeroing Control Barrier Functions: Consider the system 
\begin{align}\label{sysZCBF}
    \dot{\bs{x}} =f(\bs{x})+g(\bs{x})\bs{u}
\end{align}
, where $f(\cdot)$ and $g(\cdot)$ are locally Lipschitz functions and $\bs{u}$ is constrained in a compact set $U\subset \mathbb{R}^m$. Let $\mathcal{C}_b\subset D$ be the superlevel set of a continuously differentiable function $b: D \subset \mathbb{R}^n\to\mathbb{R}$, i.e., $\mathcal{C}_b :=\{\bs{x}\in\mathbb{R}^n: b(\bs{x})\geq 0\}$. Then, $b$ is a zeroing control barrier function (ZCBF) for the system \eqref{sysZCBF} if there exists an extended class $\mathcal{K}_{\infty}$ function $\alpha$ such that 
\begin{align}\label{CBFCOND}
    \sup_{u\in U} [ \nabla b(\bs{x})^{\top}(f(\bs{x})+g(\bs{x})\bs{u})+\alpha(b(\bs{x}))]\geq 0, \bs{x}\in D.
\end{align}
\end{definition}
%The overall control architecture is illustrated in Fig.\ref{fig:closed_loop_control_scheme}.
Consider the barrier functions of the form:    \begin{align}\label{barrier1}
    b_{i}(e_i) := \left(e_i+\bunderline{M}_{i}\right)\left(\overbar{M}_{i}-e_i\right),~i\in\{f,y,z,o_1,o_2,o_3\}.
    \end{align}
    Accordingly, the corresponding safe sets of the barrier functions \eqref{barrier1} will be as:
    \begin{align}\label{safe sets}
    \mathcal{S}_{i} := \{e_i\in\mathbb{R}: b_i(e_i)\geq 0\},~i\in\{f,y,z,o_1,o_2,o_3\}.
    \end{align}
%    In order to express the safety kinematic constraints for the system in a unified manner, we define a vector representation for the barrier function as follows:
%\begin{align}\label{barrier_vec}
%    b(\bs{e}) := [b_i(e_i)]\in \mathbb{R}^6,~i\in\{f,y,z,o_1,o_2,o_3\},
%    \end{align}
%where $\bs{e}:=[e_i]\in \mathbb{R}^6, ~i\in\{f,y,z,o_1,o_2,o_3\}$ defined in \eqref{err_3}.    
\begin{lem}
Let the sets $\mathcal{S}_i$, $i\in\{f,y,z,o_1,o_2,o_3\}$ as the superlevel set of continuously differentiable functions $b_i:\mathbb{R}\to\mathbb{R}$ defined in \eqref{barrier1}. Then, the safety-critical velocity based controller from the quadratic problem
 \begin{align*}
        \dot {\bs{x}}^*({\bs{x}},t) = argmin_{\dot {\bs{x}}\in\mathbb{R}^6} \|-\gamma\bs{e}\|^2
    \end{align*}
%     \rm{s.t.}
% \begin{align}\label{lem1}
% &\begin{bmatrix}
% J_{1:3}(q) \zeta-\dot{\bf{x}}_{d}(1:3)[(\overbar{M}_{i}-\underbar{M}_{i})-2(x_i-x_{d,i})]\\
% (\bs{L}\bs{\omega}_e-\bs{L}\bs{\omega}^d)[(\overbar{M}_{i}-\underbar{M}_{i})-\Big(\bs{n}_e\times\bs{n}^d+\bs{o}_e\times\bs{o}^d+\bs{\alpha}_e\times\bs{\alpha}^d\Big)]
% \end{bmatrix}\nonumber\\&
%  +[\alpha_i(b_i)]_{i\in\{f,y,z,o_1,o_2,o_3\}}\geq 0,    
%     \end{align}
%     for class $\mathcal{K}_\infty$ functions $\alpha_i$, $i\in\{f,y,z,o_1,o_2,o_3\}$.
% \end{lem}
\vspace{-4mm}\rm{s.t.}
\begin{align}\label{lem1}
&\bs{J}(\bs{q}) \bs{\zeta}-\dot{\bs{X}}_{d}[(\overbar{\bs{M}}-\bunderline{\bs{M}})-2(\bs{x}-\bs{x}_d)]
\nonumber\\ &+[\alpha_i(b_i)]_{i\in\{f,y,z,o_1,o_2,o_3\}}\geq \bs{0},    
    \end{align}
    in which $\overbar{\bs{M}}:=[\overbar{M}_i]\in \mathbb{R}^{6}$, $\bunderline{\bs{M}}:=[\bunderline{M}_i] \in \mathbb{R}^{6}$, ${i\in\{f,y,z,o_1,o_2,o_3\}}$, $\dot{\bs{X}}_{d}:=\dot{\bs{x}}_{d}I_6 \in \mathbb{R}^{6\times6}$, for identity matrix $I_6\in \mathbb{R}^{6\times6}$, and the functions $\alpha_i$, $i\in\{f,y,z,o_1,o_2,o_3\}$ are of class $\mathcal{K}_\infty$, ensures the forward invariance of the sets $\mathcal{S}_i$ and hence the safety satisfaction. Moreover, the closed-form solution is given by 
%\begin{align}\label{closed_form}
%        \dot {\bs{x}}^*({\bs{x}},t)= \dot {\bs{x}}_{c}({\bs{x}},t)+
       % \left\{ \begin{array}{l}
%        \begin{bmatrix}
%\frac{\bs{H}\Psi}{\|\bs{H}\|^2}\red{(i)}\;\;\;\;\;\; if\;\; \Psi\red{(i)}< 0\\
%0\;\;\;\;\;\;\;\;\;\;\;\;\;\;\;\;\;\; if\;\; \Psi\red{(i)}\geq 0
%\end{bmatrix}_{\red{i\in\{1,\cdots,6\}}},
      %\end{array}
      %\right.
%    \end{align}
 \begin{align}\label{closed_form}
        \dot {\bs{x}}^*({\bs{x}},t)= \dot {\bs{x}}_{c}({\bs{x}},t)+\bs{l}(\bs{x},t)
    \end{align}
    in which $\bs{l}=[{l}_i]_{{i\in\{1,\cdots,6\}}}\in\mathbb{R}^6$ and
\begin{align}\label{closed_form}
        {l}_i= 
        \left\{ \begin{array}{l}
h_i\;\;\;\;\;\;\;\;\;\;\;\;\; \text{if}\;\; \psi_i< 0\\
0\;\;\;\;\;\;\;\;\;\;\;\;\;\;\text{if}\;\; \psi_i\geq 0, 
      \end{array}
      \right.
    \end{align}
     where $h_i$ is the $i$th element of the vector $\bs{h}= [h_i]_{{i\in\{1,\cdots,6\}}}=\frac{\bs{H}\Psi}{\|\bs{H}\|^2}$ and $\bs{H}:=[(\overbar{\bs{M}}-\bunderline{\bs{M}})-2(\bs{x}-\bs{x}_d)]I_6  \in \mathbb{R}^{6\times 6}$. Similarly, $\psi_i$ is the $i$th element of the vector $\bs{\psi}=[\psi_i]_{{i\in\{1,\cdots,6\}}}$ and $\bs{\psi} :=  \bs{J}(\bs{q}) \bs{\zeta}-\dot{\bs{X}}_{c}[(\overbar{\bs{M}}-\bunderline{\bs{M}})-2(\bs{x}-\bs{x}_d)]
+[\alpha_i(b_i)]_{i\in\{f,y,z,o_1,o_2,o_3\}} \in \mathbb{R}^{6}$, with $\dot{\bs{X}}_{c}:=\dot{\bs{x}}_c I_6$.
   Then, the controller \eqref{closed_form} utilizes $\dot {\bs{x}}_{c}({\bs{x}},t)$ whenever the system is safe with respect to the defined barrier functions, i.e., when $\Psi\geq \bs{0}$. On the other hand, in the case that $\dot {\bs{x}}_{c}({\bs{x}},t)$ doesn't enforce the safety condition, the controller \eqref{closed_form} enforces the system  until $\dot {\bs{x}}_{c}({\bs{x}},t)$ is safe again.
\end{lem}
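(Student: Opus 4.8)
The plan is to read the quadratic program as a \emph{safety filter} that projects the nominal, exponentially stabilizing velocity $\dot{\bs{x}}_c$ of \eqref{traj_cont} onto the feasible set carved out by the ZCBF constraint \eqref{lem1}, and then to establish the two assertions separately: (i) forward invariance of every $\mathcal{S}_i$ via Definition~\ref{def1}, and (ii) the announced minimizer via the KKT conditions. I interpret the cost as penalizing the deviation of the realized velocity from the nominal one, i.e. effectively $\|\dot{\bs{x}}-\dot{\bs{x}}_c\|^2$, which is the reading consistent with the stated solution $\dot{\bs{x}}^*=\dot{\bs{x}}_c+\bs{l}$: the nominal term is returned whenever it is already feasible, and $\bs{l}$ is a correction applied only on channels where a barrier would otherwise be violated.

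The first step is to show that \eqref{lem1} is exactly the stack of scalar ZCBF conditions $\dot{b}_i+\alpha_i(b_i)\geq 0$ for $i\in\{f,y,z,o_1,o_2,o_3\}$. Differentiating \eqref{barrier1} gives $\nabla b_i=\overbar{M}_i-\bunderline{M}_i-2e_i$, hence $\dot{b}_i=(\overbar{M}_i-\bunderline{M}_i-2e_i)\dot{e}_i$; substituting the error rates \eqref{err_dot} together with $\dot{\bs{x}}=\bs{J}(\bs{q})\bs{\zeta}$ and the diagonal abbreviations $\overbar{\bs{M}},\bunderline{\bs{M}},\dot{\bs{X}}_d,\bs{H}$ reproduces \eqref{lem1} row by row. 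The force channel needs separate care because $\dot{e}_f$ carries the factor $\partial f(\chi)$; here I invoke \eqref{bound_part_force}, i.e. $0<\bunderline{\partial f}\leq\partial f(\chi)\leq\overbar{\partial f}$, which keeps the force inequality correctly signed and the corresponding coefficient well defined and nonzero, so that row remains a genuine affine inequality in the decision variable.

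With \eqref{lem1} identified as the ZCBF condition, the safety claim follows: since each $b_i$ is continuously differentiable and $\alpha_i\in\mathcal{K}_\infty$ (so $\alpha_i(0)=0$), any $\dot{\bs{x}}$ feasible for \eqref{lem1} meets the hypothesis of Definition~\ref{def1} on $\mathcal{S}_i$, and on the boundary $\partial\mathcal{S}_i$ one has $b_i=0$, so the constraint forces $\dot{b}_i\geq-\alpha_i(0)=0$ and the vector field never points out of $\mathcal{S}_i$; Nagumo's theorem, as packaged in \cite{ames2016control}, then yields forward invariance. Because the barriers are decoupled across channels—the gradient of the $i$th constraint points along the $i$th coordinate of $\dot{\bs{x}}$, making $\bs{H}$ diagonal—the feasible set is never empty, the correction in $\bs{l}$ can always restore any violated row, and hence the QP admits a solution for every $(\bs{x},t)$ and safety holds for all $t\geq 0$.

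The final step solves the QP explicitly. Evaluating the left side of \eqref{lem1} at the nominal input $\dot{\bs{x}}_c$ defines the vector $\bs{\psi}$ with entries $\psi_i$: rows with $\psi_i\geq 0$ are inactive, so by complementary slackness their correction vanishes and $l_i=0$; for a violated row $\psi_i<0$, KKT stationarity aligns the cost gradient with the active-constraint gradient, and projecting $\dot{\bs{x}}_c$ onto the $i$th active hyperplane (the $i$th row of \eqref{lem1} set to equality) returns $l_i=h_i$ with $\bs{h}=\bs{H}\bs{\psi}/\|\bs{H}\|^2$, which is precisely \eqref{closed_form}. I would close by verifying the two KKT requirements—primal feasibility (the corrected row attains equality while the untouched rows stay nonnegative, thanks to the orthogonality coming from the diagonal $\bs{H}$) and dual feasibility with complementarity. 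I expect the principal difficulty to be bookkeeping rather than conceptual: keeping the sign of $\nabla b_i$ consistent when $e_i$ is large, correctly folding the factors $\partial f(\chi)$ and $\bs{L}$ into the force and orientation rows, and confirming that correcting one violated channel cannot push a previously satisfied channel out of its set—a property that hinges entirely on the diagonal structure of $\bs{H}$ that makes the per-row formula \eqref{closed_form} exact.
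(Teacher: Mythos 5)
Your proposal takes essentially the same route as the paper's proof, which consists of a single sentence invoking the KKT optimality conditions together with Definition~\ref{def1} and the ZCBF forward-invariance result of \cite{xu2015robustness}; you have simply filled in those two ingredients explicitly (the stacked per-channel ZCBF inequalities giving invariance of each $\mathcal{S}_i$, and the active/inactive-constraint KKT analysis giving the closed form $\dot{\bs{x}}_c+\bs{l}$). Your reading of the objective as $\|\dot{\bs{x}}-\dot{\bs{x}}_c\|^2$ is the correct charitable interpretation of the stated cost, and the remaining steps are consistent with the paper's argument.
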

\begin{proof}
The closed form- expression for $\dot {\bs{x}}^*({\bs{x}},t)$ is achieved by the satisfaction of KKT optimality conditions and following Definition \ref{def1} and \cite[Theorem 2]{xu2015robustness}.
\end{proof}

Subsequently, the task-space desired motion profile $\dot{\bs{x}}^*$ can be expressed equivalently in the configuration space via:
\begin{align}
{\bs{\zeta}}^r(t)=\bs{J}(\bs{q})^{\#}\dot{\bs{x}}^*+\big(\bs{I}_{n\times n}-\bs{J}(\bs{q})^{\#}\bs{J}\big(\bs{q}\big)\big)\dot{\bs{x}}^0 \in \mathbb{R}^n\label{jointtask}
\end{align}
where $\bs{J}(\bs{q})^{\#}$ denotes the generalized pseudo-inverse \cite{citeulike:6536020} of the Jacobian $\bs{J}(\bs{q})$ and $\dot{\bs{x}}^0$ denotes secondary tasks (e.g., maintaining manipulator's joint limits, increasing manipulability) to be regulated independently since they do not contribute to the end-effector's velocity \cite{Simetti2016877} (i.e., they	belong to the null space of the Jacobian  $\bs{J}(\bs{q})$)\footnote{For more details on task priority based control and redundancy resolution for MMSs the reader is referred to \cite{Simetti2016877} and \cite{Soylu2010325}.}.
\subsection{Control Barrier Function Based Velocity Control}
Given the desired configuration space motion profile ${\bs{\zeta}}^r(t)$ in \eqref{jointtask} that satisfies different operational limitations, we proceed with the design of a CBF-based velocity controller that achieves certain predefined minimum speed of response. Similar to the kinematic safety constraints, the first step is to define the velocity error vector:
\begin{align}
\bs{e}_\zeta(t)\triangleq[{e}_{\zeta_1}(t),\ldots,{e}_{\zeta_n}(t)]^\top= {{\bs{\zeta}}}(t)- {{\bs{\zeta}}}^r(t) \in \mathbb{R}^{n}\label{eq14}
\end{align}
%and select the corresponding functions: %and select the corresponding functions $\rho_{\zeta_j}(t)=(\rho^0_{\zeta_j}-\rho_{\zeta_j}^\infty)e^{-l_{\zeta_j}t}+\rho_{\zeta_j}^\infty$   {\max{\{\bunderline{M}_{\zeta_i}, \overline{M}_{\zeta_i}\} }}
We aim to impose bounded response on the system velocities errors $e_{\zeta_i}(t),i=1,\ldots,n$ as well by satisfying:
\begin{align}
-{\rho}_{\zeta_i}<e_{\zeta_i}(t)<{\rho}_{\zeta_i},~\forall t\geq 0~i=1,\ldots,n\label{eq:111}
\end{align}
where, ${\rho}_{\zeta_i}, i=1,\ldots,n$ is the predefined velocity bound that is set according to each system DoF.  We define the new error vector
\begin{align}
    \xi = \xi(\bs{\zeta},t) :=\rho_{{\zeta}}^{-1}e_{{\zeta}}(t)
\end{align}
where $\rho_{{\zeta}}:=\diag(\rho_{\zeta_i})$, ${i=1,\ldots,n}$. The control objective is then equivalent to maintaining the normalized error $\xi(t)$ in the set $(-1,1)$. For this aim, we define the continuously differentiable barrier function $b: \mathbb{R}^n\times\mathbb{R}_{\geq 0}\to\mathbb{R}$ and its 0-superlevel set as
\begin{align}\label{vel_barrier}
    &b({\bs{\zeta}},t) := \frac{1}{2}(1-\|\xi\|^2) = \frac{1}{2}(1-\|\xi(\bs{\zeta},t)\|^2),
    \end{align}
    \begin{align}\label{vel_set}
    &\mathcal{C}_b(t) :=\{\bs{\zeta}\in\mathbb{R}^n: \frac{1}{2}(1-d^2)\geq b(\bs{\zeta},t)\geq 0\}.
\end{align}
where the constant $0<d<1$ with $d\leq \|\xi\|^2$ is considered for the sake of controllability maintenance. The goal is to render the set $\mathcal{C}_b(t)$ forward invariant, i.e., to guarantee that $\bs{\zeta}(t)\in\mathcal{C}_b(t), \forall{t\geq 0}$, provided that $\bs{\zeta}(0)\in\mathcal{C}_b(0)$\cite{ames2016control}. By evaluating the derivative of $b(\bs{\zeta},t)$ along the system dynamics \eqref{eq4}, and by considering Definition \ref{def1} it can be concluded that
%one obtains
% \begin{align*}
%     &\frac{\partial b(\bs{\zeta},t)}{\partial \bs{\zeta}}\dot{\bs{\zeta}} + \frac{\partial b(\bs{\zeta},t)}{\partial t}= -\xi^{\top}\rho^{-1}(\bs{\zeta} -\bs{\zeta}^r)\dot\xi\nonumber\\
%     &=-\xi^{\top}\rho^{-1}(\bs{\zeta} -\dot{\bs{\zeta}}^r)\bs{M}^{-1}(\bs{q})\nonumber\\&\times(-\bs{C}({\bs{q}},\bs{\zeta}){\bs{\zeta}}\!-\!\bs{D}({\bs{q}},\bs{\zeta}){\bs{\zeta}}\!-\!\bs{g}(
% \bs{q})\!-\!\boldsymbol{\delta}(\bs{q},\bs{\zeta},t)+\bs{\tau}).
% \end{align*}
% Consider the set 
% \begin{align}\label{SetK}
%     &\mathcal{K}_v(\bs{q},\bs{\zeta}, t)=\{\bs{\tau}\in \mathbb{R}^n: -\xi^{\top}\rho^{-1}(\bs{\zeta}-\bs{\zeta}^r)\bs{M}^{-1}(\bs{q})\nonumber\\&\times(-\bs{C}({\bs{q}},\bs{\zeta}){\bs{\zeta}}\!-\!\bs{D}({\bs{q}},\bs{\zeta}){\bs{\zeta}}\!-\!\bs{g}(
% \bs{q})\!-\!\boldsymbol{\delta}(\bs{q},\bs{\zeta},t)+\bs{\tau})\nonumber\\&+\alpha(\frac{1}{2}(1-\|\xi\|^2))\geq 0\}.
% \end{align}
%According to Definition \ref{def1}, 
if there exists an extended class $\mathcal{K}_{\infty}$ function $\alpha$ such that the set $\mathcal{K}_v(\bs{q}, \bs{\zeta}, t)$ is non-empty for all $\bs{\zeta}$, the function $b(\bs{\zeta},t)$ is a ZCBF.
Therefore, considering the unknown  $\delta(\bs{q},\bs{\zeta},t)$ and its upper bound  $\bar\delta\geq\bs\delta(\bs{q},\bs{\zeta},t)>0, \forall(\bs{q},\bs{\zeta},t)$, we can define a conservative set $\bar {\mathcal{K}_v}(\bs{q}, \bs{\zeta}, t)$ as below
\begin{align}\label{SetKnew}
    &\bar {\mathcal{K}}_v(\bs{q}, \bs{\zeta}, t)=\{\bs{\tau}\in \mathbb{R}^n: -\xi^{\top}\rho^{-1}(\bs{\zeta} -\bs{\zeta}^r)\bs{M}^{-1}(\bs{q})\nonumber\\&\times(-\!\bs{C}({\bs{q}},\bs{\zeta}){\bs{\zeta}}\!-\!\bs{D}({\bs{q}},\bs{\zeta}){\bs{\zeta}}\!-\!\bs{g}(
\bs{q})\!+\bs{\tau})\nonumber\\&+\alpha(\frac{1}{2}(1-\|\xi\|^2))-\bar\delta\|\xi^{\top}\rho^{-1}(\bs{\zeta} -\bs{\zeta}^r)\bs{M}^{-1}(\bs{q})\|\geq 0\}.
\end{align}
The non-emptiness of $\bar {\mathcal{K}}_v(\bs{q}, \bs{\zeta}, t)$ implicates the
standard ZCBF-based condition, as in \eqref{CBFCOND}.
The control design consists of computing a controller that satisfies $\bs{\tau}(\bs{q}, \bs{\zeta}, t)\in\bar {\mathcal{K}_v}(\bs{q}, \bs{\zeta}, t)$ for all $( \bs{q}, \bs{\zeta}, t)$, given a ZCBF $b(\bs{\zeta},t)$.
\newline
%Accordingly, we consider the quadratic program 
%\begin{align}\label{qp}
%    \tau^*(\bs{\zeta}, \bs{q},t)=argmin_{\bs{\tau} \in \Re^n}\|\tau\|^2
%\end{align}
%s.t 
%\begin{align*}
%  &-\xi^{\top}\rho^{-1}(\bs{\zeta} -\bs{\zeta}^r)\bs{M}^{-1}(\bs{q})(-\bs{C}({\bs{q}},\bs{\zeta}){\bs{\zeta}}-\bs{D}({\bs{q}},\bs{\zeta}){\bs{\zeta}}-\bs{g}(
%\bs{q})\\&+\bs{\tau})+\alpha(\frac{1}{2}(1-\|\xi\|^2))-\bar\delta\|\xi^{\top}\rho^{-1}(\bs{\zeta} -\bs{\zeta}^r)\bs{M}^{-1}(\bs{q})\|\geq 0 . 
%\end{align*}
%Given a function $b(\bs{\zeta},t)$, the feasibility of \eqref{qp} for
%$(\bs{\zeta}, \bs{q},t)$ guarantees that $b(\bs{\zeta},t)$ is a ZCBF.
\subsection{Kinematic-Consistent Dynamic Safety-Critical Control}
Now it is required to ensure that the kinematics barrier functions \eqref{barrier1} are consistent with the system dynamics. In order to achieve this, we follow an energy-based approach provided in \cite{singletary2021safety}. We will guarantee the safety of the combined dynamics of the mobile manipulator systems. First, consider the FBMMS's dynamics \eqref{eq4}. In order to bridge the kinematics to dynamics, we extend the kinematic safe set to a dynamic one. In this regard, we note that the inertia matrix $\bs{M}(\bs{q})$ is a symmetric positive-definite matrix, i.e., $\bs{M}(\bs{q})=\bs{M}(\bs{q})^T>0$. Then we have
\begin{align}\label{symm}
    \lambda_{min}(\bs{M}(\bs{q}))\|\bs{q}\|^2\leq \bs{q}^T(\bs{M}(\bs{q}))\bs{q} \leq \lambda_{max}(\bs{M}(\bs{q}))\|\bs{q}\|^2,
\end{align}
where $\lambda_{min}$ and $\lambda_{max}$ are the minimum and maximum eigenvalues (dependent on $\bs{q}$) of the inertia matrix $\bs{M}(\bs{q})$, and are positive due to the positive-definiteness property of $\bs{M}(\bs{q})$.

In addition, let 
\begin{align}\label{under approximation}
   {b_k}:=&-\frac{1}{\eta}\ln(\sum_{i} \exp(-\eta b_{i}(e_i)), 
\end{align}
with $\eta>0$ be a smooth approximation of the min-operator to encode the conjunction of barrier functions $b_{i}(e_i)$, $~i\in\{f,y,z,o_1,o_2,o_3\}$ in \eqref{barrier1}. Note that the accuracy of this approximation is proportionally related to $\eta$, and regardless of the choice of $\eta$ we have 
\begin{align}
    &-\frac{1}{\eta}\ln(\sum_{i=1}^{n} \exp(-\eta b_{i}(e_i))\leq \min (b_{i}(e_i)_{~i\in\{f,y,z,o_1,o_2,o_3\}}.
\end{align}
Therefore, $b_k\geq 0$ implies $b_{i}(e_i)\geq 0$ for all $~i\in\{f,y,z,o_1,o_2,o_3\}$, that means practically the satisfaction of the kinematic safety constraints of \eqref{barrier1}. 
%Now considering the Remark-\ref{remark_sos}
 %and the relation \eqref{relation_s2q}, 
 %we affirm the existence of a barrier function $b_k$ for the joint configuration space of the manipulator, which mirrors the behavior of the barrier function $b_s$ in the image space. Specifically, when $b_s\geq 0$, ensuring that the image features $\bs{s}$ are within the camera's field of view, $b_k$ is also assumed to be greater than zero. This ensures that the manipulator remains in configurations that keep the image features within the camera's field of view. This is especially relevant in our setup, where the system is redundant, allowing for multiple joint configurations that result in the same image features. Considering the Remark-\ref{remark_sos} and the barrier function defined in \eqref{under approximation}, a simple candidate for such a barrier function could be  $b_k= \mu b_s$ where $\mu\geq 0$ to be a positive gain. Under this formulation,  $b_k\geq 0$ implies $b_s\geq 0$, meaning the image features are within the field of view.
\begin{definition}\label{dynkin barriers}
    Consider the kinematic barrier function {$b_k$} , the associated \emph{energy-based barrier function} is defined as 
\begin{align}\label{dynamic_barrier}
       b_d(\bs{q}, \bs{\zeta}) :=-\frac{1}{2}\bs{\zeta}^\top \bs{M}(\bs{q})\bs{\zeta}+\gamma b_k\geq 0,
    \end{align}
    with $\gamma>0$. The corresponding \emph{dynamic safety set} is defined as
    \begin{align}\label{dyn_safeSet}
    \mathcal{C}_d :=\{(\bs{q}, \bs{\zeta})\in Q\times\mathbb{R}^{n} : b_d(\bs{q}, \bs{\zeta})\geq 0\}.  
    \end{align}
\end{definition}
It can easily be shown that $ \mathcal{C}_d\subset \mathcal{C}$, and $Int(\mathcal{C})\subset \lim_{\gamma\to\infty}\mathcal{C}_d\subset\mathcal{C}$, where $\mathcal{C}$ determines the safe set corresponding to the barrier function $b_k$ \cite{singletary2021safety}. 
%Next, we prove that the dynamic barrier functions including the velocity-based barrier function $b(\bs{\zeta},t)$ defined in \eqref{vel_barrier} and the energy-based barrier function in Definition \ref{dynkin barriers} are valid zeroing control barrier functions (ZCBF),
%which induces a new class of QPs that guarantee safety.
\begin{theorem}
    Consider the robotic system dynamic \eqref{eq4}. Let  $b_k: Q \subset\mathbb{R}^{n}\to\mathbb{R}$ the kinematic barrier function 
    %with the corresponding safe set $ \mathcal{C}_k :=\{({\bs{q}}, \bs{\zeta})\in Q\times\mathbb{R}^{n} : b_k({\bs{q}})\geq 0\}$.
    %Furthermore, consider
     , the energy-based barrier function $b_{d}$ as in \eqref{dynamic_barrier}, and the velocity-based barrier function $b$ as in \eqref{vel_barrier}. %and their corresponding safe sets defined in  \eqref{dyn_safeSet} and \eqref{vel_set}, respectively.
     Then, $b_{d}$ and $b$ are valid zeroing control barrier function on $\mathcal{C}_d$ and $\mathcal{C}_b$, respectively. 
    Furthermore, the control signal $\bs{\tau}^{\text{safe}}(\bs{q}, \bs{\zeta},t)$ that can solve the following optimization problem
    \begin{align*}
\min_{\bs{\tau}\in\mathbb{R}^n}\|\bs{\tau}-\bs{\tau}_{\text{des}}(\bs{q}, \bs{\zeta},t)\|^2
    \end{align*}
    \rm{s.t.}
    \begin{subequations}
    \begin{align}\label{cont_dyn}
-\bs{\zeta}^\top\bs{\tau}+\bs{g}^\top\bs{\zeta}-\bar{\delta}^\top\|\bs{\zeta}\|+\gamma\bs{J}_{b_k}\bs{\zeta}\geq -\alpha(b_{d}(\bs{q}, \bs{\zeta})),
    \end{align}
    \begin{align}\label{bar_dyn_1level}
  &\!-\!\xi^{\top}\!\!\rho^{\!-1}(\bs{\zeta}\! -\!\bs{\zeta}^r)\bs{M}^{\!-1}(\bs{q})(-\bs{C}({\bs{q}},\bs{\zeta}){\bs{\zeta}}\!-\!\bs{D}({\bs{q}},\bs{\zeta}){\bs{\zeta}}\!-\!\bs{g}(\bs{q})\!+\!\bs{\tau}\!)\nonumber\\&\!\!-\!\bar\delta\|\xi^{\top}\rho^{-1}(\bs{\zeta}\! -\!\bs{\zeta}^r)\bs{M}^{-1}(\bs{q})\|\geq -\!\alpha(b({\bs{\zeta}},t)), 
\end{align}
\end{subequations}
 in which $\bs{\tau}_{\text{des}}(\bs{q}, \bs{\zeta},t)$ is a given desired stabilizing controller, $\bar\delta\geq\bs\delta(\bs{q},\bs{\zeta},t)>0, \forall(\bs{q},\bs{\zeta},t)$ and $\bs{J}_{b_k}=\frac{\partial b_k}{\partial \bs{q}}$, guarantees the forward invariance (safety) of $\mathcal{C}_d$. 
 %Moreover, the closed-loop solution of the safety preserving controller $\bs{\tau}^{\text{safe}}(\bs{q}, \bs{\zeta},t)$ follows as below.
 %\begin{align}\label{pr:20} 
 %\!\!\!\!\!\!\bs{\tau}^{\text{safe}} (\bs{q}, \bs{\zeta},t)\!=\!\left\{\!\!\begin{array}{l}
% \bs{\tau}_{\text{des}}(\bs{q}, \bs{\zeta},t) \qquad\qquad  \text{if}\; \min\{\psi_{1d},\psi_{2d}\}\geq 0, \\
% -\frac{\bs{\zeta}\psi_{1d}}{\|\bs{\zeta}\|^2}\!+\!\bs{\tau}_{\text{des}}(\bs{q}, \bs{\zeta},t)\quad\;\; \text{if}\;\psi_{2d}\geq 0,\psi_{1d}<0,\\
% -\frac{\bs{\zeta}_{\psi_{2d}}\psi_{2d}}{\|\bs{\zeta}_{\psi_{2d}}\|^2}\!+\!\bs{\tau}_{\text{des}}(\bs{q}, \bs{\zeta},t)\;\; \text{if}\; \psi_{1d}\geq 0,\psi_{2d}<0,\\
% -\frac{\bs{\zeta}\psi_{1d}}{\|\bs{\zeta}\|^2}\!-\frac{\bs{\zeta}_{\psi_{2d}}\psi_{2d}}{\|\bs{\zeta}_{\psi_{2d}}\|^2}+\!\bs{\tau}_{\text{des}}(\bs{q}, \bs{\zeta},t)\quad \text{otherwise},
% \end{array}
%       \right. 
% \end{align}
% where $\psi_{1d}:=\bs{\zeta}^\top(\gamma\bs{J}_{b_k}^\top+\bs{g}(\bs{q})-\bs{\tau}_{\text{des}}(\bs{q}, \bs{\zeta},t))-\bar{\delta}^\top\|\bs{\zeta}\|+\alpha(b_{df}(\bs{q}, \bs{\zeta}))$, $\bs{\zeta}_{\psi_{2d}}:=\xi^{\top}\!\!\rho^{\!-1}(\bs{\zeta}\! -\!\bs{\zeta}^r)\bs{M}^{\!-1}(\bs{q})$ , and 
% $\psi_{2d}:=-\xi^{\top}\!\!\rho^{\!-1}(\bs{\zeta}\! -\!\bs{\zeta}^r)\bs{M}^{\!-1}(\bs{q})(-\bs{C}({\bs{q}},\bs{\zeta}){\bs{\zeta}}\!-\!\bs{D}({\bs{q}},\bs{\zeta}){\bs{\zeta}}\!-\!\bs{g}(\bs{q})\!+\!\bs{\tau}_{\text{des}}\!)\!+\!\alpha(b({\bs{\zeta}},t))\!-\!\bar\delta\|\xi^{\top}\rho^{-1}(\bs{\zeta}\! -\!\bs{\zeta}^r)\bs{M}^{-1}(\bs{q})\|$.
 \end{theorem}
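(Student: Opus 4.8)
The plan is to prove the three assertions of the statement in order: that the energy-based barrier $b_d$ and the velocity barrier $b$ are each valid zeroing control barrier functions in the sense of Definition \ref{def1}, and that any feasible solution $\bs{\tau}^{\text{safe}}$ of the QP renders the dynamic safe set $\mathcal{C}_d$ forward invariant despite the disturbance $\bs{\delta}$.

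First I would certify the ZCBF property of $b_d$ by differentiating \eqref{dynamic_barrier} along the dynamics \eqref{eq4}. Writing $\dot{b}_d = -\tfrac{1}{2}\tfrac{d}{dt}\big(\bs{\zeta}^\top\bs{M}(\bs{q})\bs{\zeta}\big) + \gamma\bs{J}_{b_k}\bs{\zeta}$ and substituting $\bs{M}(\bs{q})\dot{\bs{\zeta}}$ from \eqref{eq4}, the crucial simplification is the skew-symmetry identity $\bs{\zeta}^\top\big(\dot{\bs{M}}(\bs{q}) - 2\bs{C}(\bs{q},\bs{\zeta})\big)\bs{\zeta} = 0$, which cancels the Coriolis contribution and yields $\dot{b}_d = -\bs{\zeta}^\top\bs{\tau} + \bs{\zeta}^\top\bs{D}(\bs{q},\bs{\zeta})\bs{\zeta} + \bs{g}(\bs{q})^\top\bs{\zeta} + \bs{\zeta}^\top\bs{J}^\top(\bs{q})\bs{\lambda} + \bs{\zeta}^\top\bs{\delta} + \gamma\bs{J}_{b_k}\bs{\zeta}$. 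Because $\bs{\tau}$ enters affinely through $-\bs{\zeta}^\top\bs{\tau}$ and is unconstrained in $\mathbb{R}^n$, the supremum in \eqref{CBFCOND} is unbounded above whenever $\bs{\zeta}\neq\bs{0}$, while at $\bs{\zeta}=\bs{0}$ one has $\dot{b}_d = 0$ together with $b_d = \gamma b_k \geq 0$, so $\dot{b}_d + \alpha(b_d)\geq 0$ holds trivially; hence the ZCBF condition is satisfied throughout $\mathcal{C}_d$. The identical argument applied to $\dot{b} = -\xi^\top\rho^{-1}(\dot{\bs{\zeta}}-\dot{\bs{\zeta}}^r)$, together with the non-emptiness of $\bar{\mathcal{K}}_v$ in \eqref{SetKnew} already established before the statement, certifies $b$ as a valid ZCBF on $\mathcal{C}_b$.

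Second, for the forward invariance claim I would show that the QP constraint \eqref{cont_dyn} is exactly a robust lower bound on $\dot{b}_d$. Using the dissipativity $\bs{\zeta}^\top\bs{D}(\bs{q},\bs{\zeta})\bs{\zeta}\geq 0$, the worst-case estimate $\bs{\zeta}^\top\bs{\delta}\geq -\bar\delta\|\bs{\zeta}\|$ arising from $\bar\delta\geq\bs{\delta}(\bs{q},\bs{\zeta},t)$, and the passivity of the compliant contact so that the contact-power term $\bs{\zeta}^\top\bs{J}^\top(\bs{q})\bs{\lambda} = \dot{\bs{x}}^\top\bs{\lambda}$ may be discarded conservatively, the expression for $\dot{b}_d$ is bounded below by the left-hand side of \eqref{cont_dyn}. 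Any $\bs{\tau}^{\text{safe}}$ feasible for \eqref{cont_dyn} therefore guarantees $\dot{b}_d \geq -\alpha(b_d(\bs{q},\bs{\zeta}))$ uniformly over all admissible disturbances, and invoking the forward-invariance result for ZCBFs (the Nagumo-type argument of \cite{ames2016control} with the robustness analysis of \cite{xu2015robustness}) gives $b_d(\bs{q}(t),\bs{\zeta}(t))\geq 0$ for all $t\geq 0$ whenever it holds at $t=0$, i.e., $\mathcal{C}_d$ is rendered forward invariant. The same comparison argument applied to \eqref{bar_dyn_1level}, which lower-bounds $\dot{b}$ after the identical disturbance estimate, simultaneously keeps $\mathcal{C}_b$ invariant, thereby enforcing the velocity envelopes \eqref{eq:111}.

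Finally I would close the loop back to kinematic safety: since $b_d\geq 0$ with $b_d = -\tfrac{1}{2}\bs{\zeta}^\top\bs{M}(\bs{q})\bs{\zeta} + \gamma b_k$ and $\bs{\zeta}^\top\bs{M}(\bs{q})\bs{\zeta}\geq 0$ together force $b_k\geq 0$, the previously established inclusion $\mathcal{C}_d\subset\mathcal{C}$ and the log-sum-exp under-approximation give $b_i(e_i)\geq 0$ for every $i\in\{f,y,z,o_1,o_2,o_3\}$, so the performance constraints \eqref{eq:11} remain satisfied. I expect the main obstacle to be the careful treatment of the contact-power term $\dot{\bs{x}}^\top\bs{\lambda}$, which does not appear explicitly in \eqref{cont_dyn}: I must argue either that passivity of the compliant interaction makes it sign-favorable (and hence droppable from the lower bound) or that it is bounded and absorbed into $\bar\delta$. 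The remaining steps -- the skew-symmetry cancellation, the disturbance bounding, and the standard ZCBF invariance lemma -- are then routine.
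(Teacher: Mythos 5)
Your proposal follows essentially the same route as the paper's (very terse) proof: differentiate $b_d$ and $b$ along \eqref{eq4}, cancel the Coriolis contribution via skew-symmetry, lower-bound the disturbance by $-\bar\delta\|\bs{\zeta}\|$, and invoke the standard ZCBF forward-invariance result. Two remarks. First, your version of the cancellation is the correct one: you use skew-symmetry of $\dot{\bs{M}}-2\bs{C}$ together with $\bs{\zeta}^\top\bs{D}\bs{\zeta}\geq 0$ as a separately discarded dissipative term, whereas the paper asserts skew-symmetry of $\dot{\bs{M}}-\bs{C}-\bs{D}$, which does not hold in general. Second, the obstacle you flag at the end is real and is not resolved by the paper either: the contact-power term $\bs{\zeta}^\top\bs{J}^\top(\bs{q})\bs{\lambda}=\dot{\bs{x}}^\top\bs{\lambda}$ appears in $\dot b_d$ but is absent from \eqref{cont_dyn}, and the paper's proof simply never mentions it; your proposed fixes (a passivity/sign argument for the compliant contact, or absorbing a bound on $\bs{J}^\top\bs{\lambda}$ into $\bar\delta$) are exactly what would be needed to make the argument airtight, so you should carry one of them out rather than leave it as a caveat.
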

 \begin{proof}
     By differentiating the barrier function $b_{d}$ along the system dynamics \eqref{eq4}, utilizing the skew symmetric property of the matrix $\dot{\bs{M}(\bs{q})}-\bs{C}({\bs{q}},\bs{\zeta})-\bs{D}({\bs{q}},\bs{\zeta})$, and the upper bound $\bar\delta$ 
     %instead of the uncertain term $\bs\delta(\bs{q},\bs{\zeta},t)>0, \forall(\bs{q},\bs{\zeta},t)$
     , we get the inequality \eqref{cont_dyn} for $\dot b_{d}\geq \alpha(b_{d}(\bs{q}, \bs{\zeta}))$. According to Definition \ref{def1} if \eqref{cont_dyn} has a solution, then $b_{d}$ is a valid ZCBF. 
 Moreover, getting the derivative of the barrier function \eqref{vel_barrier} with respect to the system dynamics \eqref{eq4} and applying the inequality condition in \eqref{CBFCOND} results in the inequality \eqref{bar_dyn_1level}.
 Then, the feasibility of \eqref{bar_dyn_1level} for
 $(\bs{q}, \bs{\zeta}, t)$ guarantees that $b(\bs{\zeta},t)$ is a ZCBF and the controller satisfying $\bs{\tau}( \bs{q},\bs{\zeta},t)\in\bar {\mathcal{K}_v}(\bs{q},\bs{\zeta}, t)$ (in \eqref{SetKnew}) for all $(\bs{q}, \bs{\zeta}, t)$. 
 %The closed-loop solution \eqref{pr:20} follows \cite[Theorem 2]{xu2015robustness} where the  KKT optimality conditions are satisfied. In addition, to show that \eqref{pr:20} is well defined we need to show that 
% \begin{align}\label{proof1}
%     L_gb_{d}(\bs{q}, \bs{\zeta}))=0\;\;\;\;\!\Longrightarrow\;\;\;\;\! L_fb_{d}(\bs{q}, \bs{\zeta}))+\alpha(b_{d}(\bs{q}, \bs{\zeta})))\geq 0,
% \end{align}
% and
% \begin{align}\label{proof2}
%     L_gb( \bs{\zeta}),t)=0\;\;\;\;\!\Longrightarrow\;\;\;\;\! L_fb( \bs{\zeta}), t)+\alpha(b(\bs{\zeta}),t))\geq 0.
% \end{align}
% To show \eqref{proof1}, $L_fb_{d}=(\bs{g}(\bs{q})+\boldsymbol{\delta}(\bs{q}, \bs{\zeta},t ))^\top\bs{\zeta}+\gamma\bs{J}_{b_k}\bs{\zeta}=0$ implies $\bs{\zeta}=0$. Since $({\bs{q}}, \bs{\zeta})\in\mathcal{C}_d$, it follows that $b_{d}({\bs{q}}, \bs{\zeta})\geq 0$. Hence, $\alpha(b_{d}(\bs{q}, \bs{\zeta}))\geq 0$ implying that \eqref{pr:20} is well defined and $b_{d}(\bs{q}, \bs{\zeta})$ is a valid ZCBF. Similarly, $L_gb({\bs{\zeta}},t)=0$ implies that $-\!\xi^{\top}\!\!\rho^{\!-1}(\bs{\zeta}\! -\!\bs{\zeta}^r)=0$, which means either $\xi$ or $\bs{\zeta}\! -\!\bs{\zeta}^r$ or both are $0$. In any of these situations, we get $L_fb({\bs{\zeta}},t)=0$- according to \eqref{bar_dyn_1level}. Then, as $({\bs{\zeta}},t)\in\mathcal{C}_b(t)$ it is concluded that $b({\bs{\zeta}},t)\geq 0$.  Hence, $\alpha(b({\bs{\zeta}},t))$ implies that \eqref{pr:20} is well defined and $b({\bs{\zeta}},t)$ is a valid ZCBF. 
 \end{proof}
\section{Simulation results}
The simulation results using MATLAB$^\circledR$ designed for an underwater vehicle manipulator system (UVMS)  with inherent uncertainties and external disturbances \cite{nikou2021robust}, equipped with a small 4 DoFs manipulator based on the Newton-Euler approach \cite{Schjolberg94modellingand}.
%The UVMS model considered in the simulation is an AUV .
%(see Fig.\ref{fig:workspace}). 
%The design parameters of the AUV and the robotic manipulator are given in Tables \ref{table.uvms}-\ref{table.lengths}. 
%The simulation was carried out using a dynamic simulation environment built in MATLAB$^\circledR$ {with sampling time $0.1~sec$, which is common in a real time operation with an underwater robotic system}. 
We consider a scenario involving 3D motion, where the end-effector is in compliant contact with a planar surface with stiffness model $f=k\chi^2$,  $k=300 \frac{N}{m^2}$.
The UVMS is in contact with the compliant environment exerting a force normal to the surface $f(0)=0.45~N$. The  desired force, normal to the surface direction, is set as $f^d(t)=1N$. The UVMS should track a desired position trajectory on the surface and attain a perpendicular orientation (i.e., $ \bs{R}^d=I_{3\times 3}$) with respect to the surface. We choose  $\bunderline{f}^*=0.2N$ and $\overline{f}^*=1.8N$ such that the contact with the surface isn't lost and that excessive interaction forces are avoided. We set the parameters $\bunderline{\bs{M}}:=[1.0,\;0.1,\;0.1,\;0.3,\; 0.2,\; 0.2 ]$,  $\overbar{\bs{M}}:=[0.5,\;0.1,\;0.1,\;0.3,\;0.2,\;0.2]$, and $\rho_{\zeta_i}=0.5$.
%\bunderline{M}_{i}<e_{i}(t)<\overline{M}_{i}
%All other performance parameters at the kinematic level are given in Table-\ref{table_performance}.
%\begin{table}[h!]
%	\centering
%	\caption{Performance parameters- First level}
%	\label{table_performance} 
%	\begin{tabular}{l l}
%		\hline
%		Parameter &  Value \\
%		\hline
%		$\overline{M}_{p_{1,2}}$, $\bunderline{M}_{p_{1,2}}$  & 	 0.2   \\
		%$\overline{M}_{o_{1,2,3}}$, $\bunderline{M}_{o_{1,2,3}}$ &     0.3  \\
%		$\rho^\infty_{f}$ &      0.4   \\ 
%		$\rho^\infty_{p_{1,2}}$ &      0.03\\	
%		$\rho^\infty_{o_{1,2,3}}$ &  0.07 \\
%		$l_i,i\{f,p_{1,2},o_{1,2,3}\}$ & 2.0 \\
%		\hline		
%	\end{tabular}
%\end{table}
%Moreover, the control gains were selected as $k_{i}=0.2,~ j\in\{f,p_1,p_2,o_1,o_2,o_3\}$. The design parameters regarding the second level were chosen as: $\rho_{\zeta_i}=0.5$, $\bs{K}_\zeta=3 I_{n\times n}$. %Notice that the dynamic parameters of the UVMS as well as the stiffness of the planar surface were considered unknown for the controller. %Furthermore, the secondary task velocities \eqref{jointtask} were designed appropriately to avoid the violation of the manipulator joint limits. However, more complicated secondary tasks (e.g., manipulability) could also be incorporated following the task priority based control techniques \cite{Simetti2016877,Simetti2016}. 
The dynamics of the UVMS are affected by external disturbances in the form of slowly time varying sea currents acting along $x$, $y$ and $z$ axes modeled by the corresponding velocities  ${v}_i^{\{I\}}=0.1\sin(\frac{\pi}{25}t)\frac{m}{s},~i\in\{x,y,z\}$.  Finally, bounded measurement noise of normal distribution with $5\%$ standard deviation was considered. 
%One should bear in mind that this is a challenging task owing to the dynamic nature of the underwater environment,  the noise as well as the external disturbances that can easily cause unpredicted instability phenomenas to the closed loop system. 
The results are depicted  in Fig.\ref{fig:force}-Fig.\ref{fig:case1_joint}.  
%The evolution of the position trajectory on the surface is presented in Fig.\ref{fig:surface}. 
%It can be easily observed that the actual position of the end-effector (indicated by red color) converges to the desired one (indicated by green color) and follows the desired trajectory profile.
Fig.\ref{fig:force} presents the evolution of the actual force exerted by the UVMS with respect to the desired force profile. It can be seen that the force exerted by the UVMS remained inside the desired region and the contact is never lost. The evolution of the errors at the first and second level of the proposed controller are illustrated in Fig.\ref{fig:case1_cartesian}  and Fig.\ref{fig:case1_joint}, respectively. It can be concluded that even with the influence of external disturbances as well as measurements noise, the errors in all directions remain in the predefined safe sets.  
%\begin{figure}[h!]
%	\centering
%	\includegraphics[scale = 0.30]{systemestatevolution.png}
%\vspace{-3mm}	\caption{The evolution of the position trajectory on the surface. The desired trajectory and the actual position of the end-effector on the surface are indicated by green and red color respectively. The end-effector position converges and follows the desired trajectory profile.}\vspace{-1mm}
%	\label{fig:surface}       % Give a unique label
%\end{figure}
\begin{figure}[h!]
	\centering
	\includegraphics[scale = 0.35]{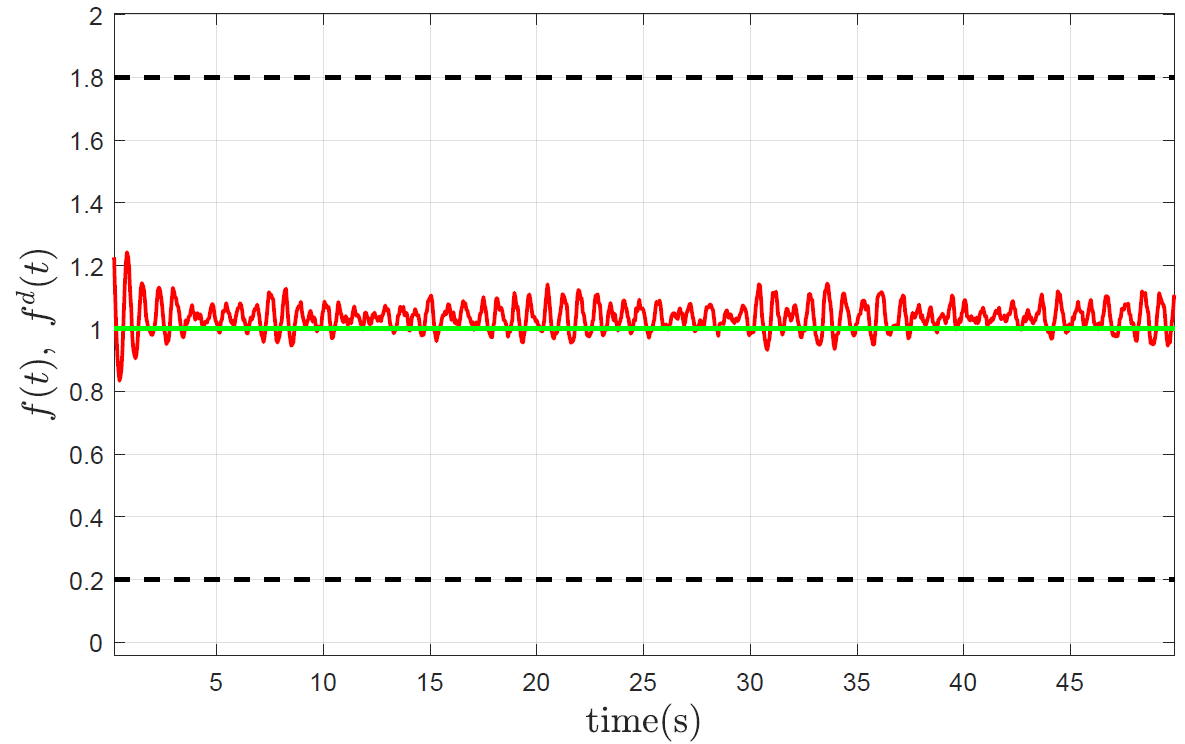}
\vspace{-3mm}\caption{The evolution of the force trajectory. The desired constant force and the actual force exerted by the UVMS are indicated by green and red color respectively. }
	\label{fig:force}       % Give a unique label
\end{figure}
\begin{figure}[h!]
	\centering
	\includegraphics[scale = 0.8]{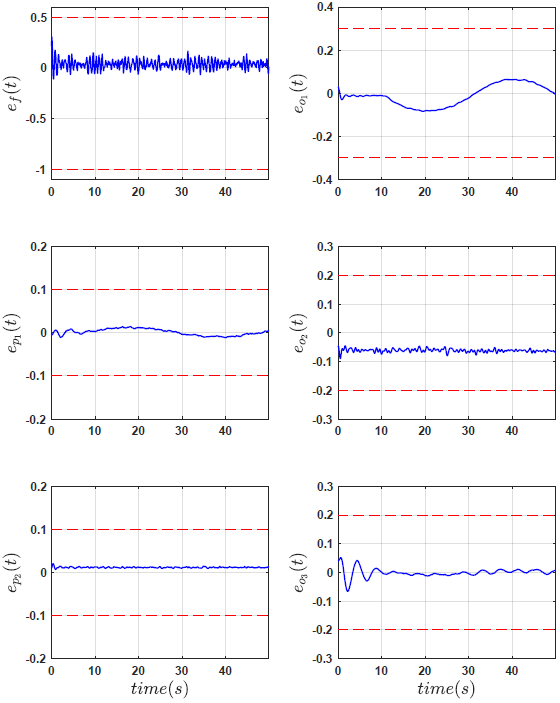}
\vspace{-3mm}\caption{ The evolution of the errors at the first level of the proposed control scheme. The errors and performance bounds are indicated by blue and red color respectively.}\vspace{-2mm}
	\label{fig:case1_cartesian}       % Give a unique label
\end{figure}
\begin{figure}[h!]
	\centering
\includegraphics[width=3.4in]{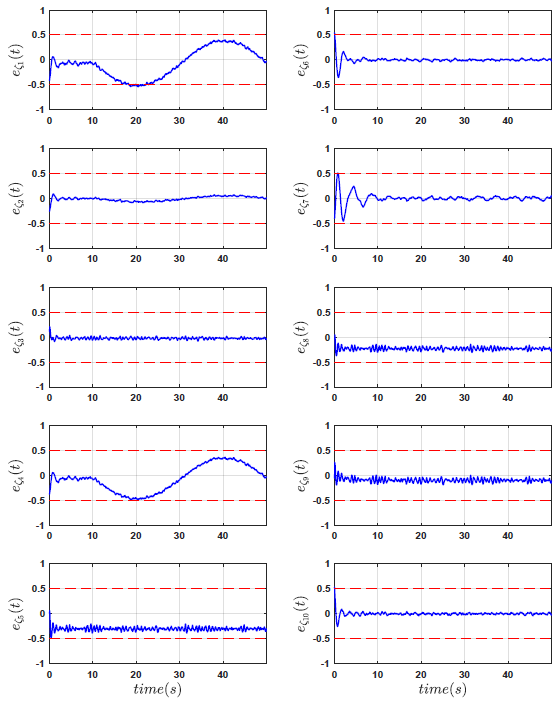}
\vspace{-3mm}	\caption{The evolution of the errors at the second level of the proposed control scheme. The errors and safety bounds are indicated by blue and red color respectively.}\vspace{-3mm}
	\label{fig:case1_joint}       % Give a unique label
\end{figure}

\section{Conclusions}\label{Conclusions}
This paper introduces a robust force/position control strategy for free floating Mobile Manipulator Systems that interact compliantly with their environment, finding notable applications in areas such as welding. The advanced control design adapts to uncertainties in system dynamic parameters and the stiffness model. It ensures a predetermined behavior by controlling desired overshoot and exhibits resilience against external disturbances and uncertainties. By integrating the Zeroing Control Barrier Function the proposed method accounts for operational challenges, including joint constraints, kinematic singularities, contact maintenance, and performance boundaries related to trajectory and rotation tracking. As a result, the proposed controller not only meets but efficiently manages all these constraints and limitations.

%\section*{Acknowledgments}
%This work was supported by the ...
%\section*{References}

%\bibliography{mybibfilealina}
%\setcitestyle{numbers,sort&compress}
%%\bibliographystyle{plain}
%\bibliographystyle{abbrvunsrtnat}
%\end{document}

\bibliographystyle{IEEEtran}
\bibliography{Ref.bib}

\end{document}